\newcommand\R{\mathbb{R}}\newcommand\CC{\mathbb{C}}\newcommand\E{\mathbb{E}}
\newcommand{\patri}{\raisebox{5pt}{\rotatebox{180}{$\in$}}} 
\newtheorem{theorem}{Theorem}
\newtheorem{defn}[theorem]{Definition}
\newtheorem{example}[theorem]{Example}
\title{Sample Complexity of Bias Detection with Subsampled Point-to-Subspace Distances}
\author{German Martinez Matilla and Jakub Marecek}
\begin{document}
\maketitle

\begin{abstract}
Sample complexity of bias estimation is a lower bound on the runtime of any bias detection method.
Many regulatory frameworks require the bias to be tested for all subgroups, whose number grows exponentially with the number of protected attributes. 
Unless one wishes to run a bias detection with a doubly-exponential run-time, one should like to have polynomial complexity of bias detection for a single subgroup.
At the same time, the reference data may be based on surveys, and thus come with non-trivial uncertainty. 
Here, we reformulate bias detection as a point-to-subspace problem on the space of measures and show that, for supremum norm, it can be subsampled efficiently. In particular, our probabilistically approximately correct (PAC) results are corroborated by tests on well-known instances.
\end{abstract}

\section{Introduction}\label{sec:intro}

Regulatory frameworks, such as the AI Act in Europe, suggest that one needs to measure data quality, including bias in the data, 
as well as bias in the output of the AI system.
Basically, one could imagine bias detection as a two-sample problem in statistics, where given two sets of samples, one asks whether they come from the same distribution.
In practice, the two sets of samples often do not come from the same distribution, but one would like to come with an estimate of the distance between the two distributions. 
The distance estimate, as any other statistical estimate \cite{tsybakov2009nonparametric}, comes with an error.
Clearly, one would like the error in the estimate to be much smaller than the estimated value
in order for the bias detection to be credible, stand up in any court proceedings, etc.  

Sample complexity is the  number of samples that makes it possible to estimate a quantity to a given error.
A lower bound on sample complexity then suggests the largest known number of samples, universally required to reach a given error. 
The sample complexity of bias estimation depends on the distance between the distributions (measures) used, including the Wasserstein-1 \cite{vaserstein1969markov}, Wasserstein-2 \cite{vaserstein1969markov,dudley1969speed}, Maximum Mean Discrepancy (MMD, \cite{gretton2012kernel}), Total Variation (TV), operator infinity norm \cite{wolfer2021statistical}, Hellinger distance \cite{hellinger1909neue} also known as Jeffreys distance, and a variety of divergences, including Kullback–Leibler (KL) and Sinkhorn. 
Notice that the TV distance (also known as the statistical distance) can be related to the KL divergence via the Pinsker inequality.
Throughout, the accuracy increases with the number of samples taken, 
but the rate depends on the dimension. As is often the case in high-dimensional probability, 
the ``curse of dimensionality'' suggests that the number of samples for a given error grows exponentially with the dimension. 
This has recently been proven for 
Wasserstein-1 \cite{fournier2015rate,weed2019sharp,panaretos2019statistical}, Wasserstein-2 \cite{dudley1969speed,fournier2015rate,weed2019sharp,panaretos2019statistical}, Wasserstein-$\infty$ \cite{fournier2015rate,liu2018rate,weed2019sharp,panaretos2019statistical}, TV \cite{devroye2018total,wolfer2021statistical,arbas2023polynomial}, operator infinity norm \cite{wolfer2021statistical}, and a variety of divergences including Sinkhorn \cite{genevay2019sample,quang2021convergence}.
For others, such as Hellinger and Jeffreys, it follows from their relationship to TV distance. 
For MMD, the situation is more complicated: while the original paper \cite{gretton2012kernel} claimed polynomial sample complexity, the more recent work explains the dependence on dimension \cite{NIPS2016_5055cbf4}, even under assumptions about smoothness coming from applying smooth kernels in a probabilistically approximately correct (PAC) setting.
More broadly, while sample complexity may be lower under assumptions on the smoothness of the measure and certain invariance properties \cite{chen2023sample,tahmasebisample}, it is very hard to assume that those assumptions hold in real-world data.  

Sample complexity of bias estimate is important for a number of reasons:
first, the sample complexity is a lower bound on the runtime, even in cases, where this is decidable  \cite{lee2023computability}.
First, many regulatory frameworks require that bias be tested for all subgroups, of which there may be exponentially many in the number of protected attributes. 
Unless one wishes to run a bias detection with a doubly-exponential run-time, one should like to have polynomial (or even sublinear) complexity of the bias detection for a single subgroup. 

Here, we reformulate the problem as a point-to-subspace problem on the space of measures and show that it can be subsampled efficiently. 

\subsection{Notation}
We use the following notation:
\begin{align*}
    \Sigma_n&\triangleq\{a\in\R^n_+:\sum^n_{i=1}a_i=1\}\\
    &\triangleq\text{Probability simplex with $n$ bins}\\
    \mathcal{X},\mathcal{Y}&\triangleq\text{Metric spaces equipped with a distance.}\\
    \mathcal{M}_+(\mathcal{X})&\triangleq\text{Set of all positive measures on $\mathcal{X}$}\\
    \mathcal{M}^1_+(\mathcal{X})&\triangleq\text{Set of probability measures, i.e., any} \\
    \alpha\in\mathcal{M}^1_+(\mathcal{X}) & \text{ is positive and has $\alpha(\mathcal{X})=\int_\mathcal{X}d\alpha =1.$}\\
    \delta_{\{x\}}&\triangleq\text{Dirac's delta at location $x.$} \\
\end{align*}





\section{A Motivating Example and Problem Definition}\label{sec:mot_example}

Let us consider a motivating example. In the COMPAS \cite{misc_ProPublica_COMPAS} dataset, there are $\approx 7\cdot 10^3$ instances, which surely do not account for the people sentenced using the COMPAS system, but we take to be a representative sample.
Focusing on the attribute \emph{decile\_score}, which tries to predict the recidivism risk, we can consider the resulting histogram as a point in $\R_+^{10},$ or, considering the normalised histogram, as a point in the 10 bin simplex, $\Sigma_{10}$ (cf. Figure \ref{fig:Whole_COMPAS_d_score}). More explicitly, Figure \ref{fig:Whole_COMPAS_d_score_counts} is a point $\R_+^{10}\patri h=(1440,941, \dots , 383),$ while for Figure \ref{fig:Whole_COMPAS_d_score_fraction} we have: $a\in\R_+^{10}:\sum_{i=1}^{10}a_i=1.$

\begin{figure}
    \centering
    \begin{subfigure}[b]{0.3\textwidth}
        \includegraphics[width=\textwidth]{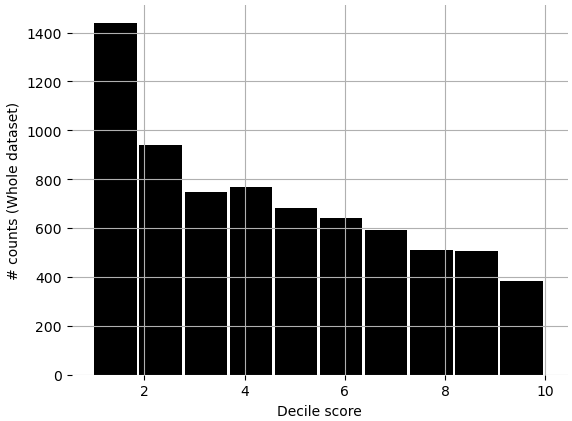}
        \caption{Vector \\ $\R_+^{10}\patri h=(1440, 941, \dots , 383)$} 
        \label{fig:Whole_COMPAS_d_score_counts}
    \end{subfigure}
    ~ 
    \begin{subfigure}[b]{0.3\textwidth}
        \includegraphics[width=\textwidth]{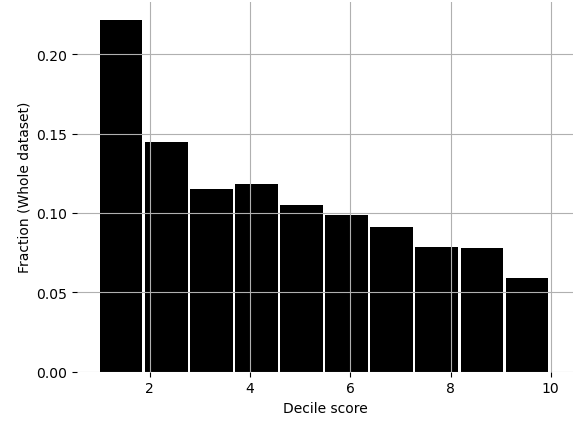}
        \caption{Probability vector $a\in\Sigma_{10}$ or measure $\alpha\in\mathcal{M}^1_+(\R)$}
        \label{fig:Whole_COMPAS_d_score_fraction}
    \end{subfigure}
    \caption{Whole COMPAS dataset by \emph{decile\_score}}\label{fig:Whole_COMPAS_d_score}
\end{figure}

It is widely understood \cite{bao2021s} that the COMPAS dataset captures a subset of the cases considered using the COMPAS system. We account for the sampling error by considering a constant uncertainty interval for every bin. 
That is, this histogram now consists of the many vectors $\tilde{h}^j\in\R_+^{10}$ that fit in the uncertainty set (composed of the product of uncertainty intervals): $\tilde{h}^j_i=([h_i-\Delta,h_i+\Delta])_i,\ i=1\dots 10.$ See Figure \ref{fig:Whole_pop_d_score_counts}. This vectors $\tilde{h}^j$ span a discrete subspace $D\subset\R_+^{10}.$
Alternatively, we can add the error $\Delta$ to the normalised histogram (Figure \ref{fig:Whole_pop_d_score_fraction}). We end up with an infinite number of points $\tilde{a}^j\in\Sigma_{10}:\tilde{a}^j_i=([a_i-\Delta,a_i+\Delta])_i, j=1, \dots , \infty$

\begin{figure}
    \centering
    \begin{subfigure}[b]{0.3\textwidth}
        \includegraphics[width=\textwidth]{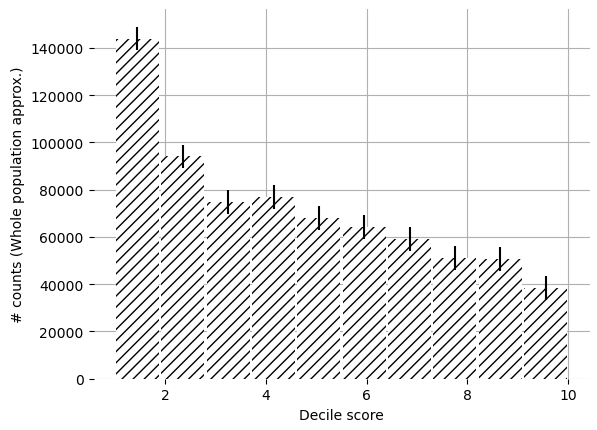}
        \caption{Finite vectors $\tilde{h}^j\in\R_+^{10}$ spanning discrete subspace $D\subset\R_+^{10}$}
        \label{fig:Whole_pop_d_score_counts}
    \end{subfigure}
    ~ 
    \begin{subfigure}[b]{0.3\textwidth}
        \includegraphics[width=\textwidth]{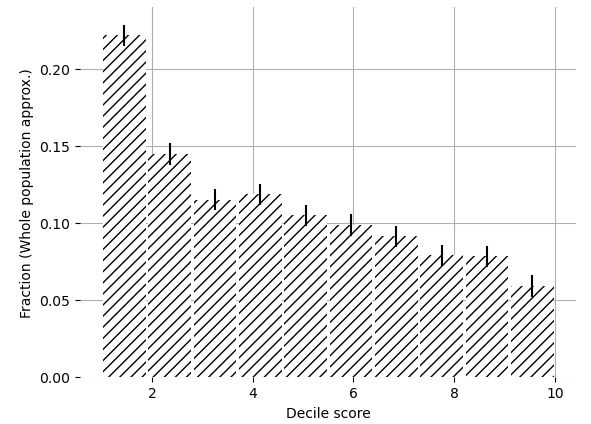}
        \caption{Infinite histograms $\tilde{a}^j\in\Sigma_{10}$ or subspace $V\subset\mathcal{M}^1_+(\R)$}
        \label{fig:Whole_pop_d_score_fraction}
    \end{subfigure}
    \caption{Whole population approximation by \emph{decile\_score}}\label{fig:Whole_pop_d_score}
\end{figure}

Next, we look at the normalised histograms in the light of measure theory. The 10 different coordinates corresponding to the 10 bins of the histogram are considered in this fashion as 10 different points, $x_i,$ on the real line. We have then for Figure \ref{fig:Whole_COMPAS_d_score_fraction}, a single atomic (i.e., discrete) measure $\mathcal{M}^1_+(\R)\patri\alpha=\sum_{i=1}^{10}a_i\delta_{x_i}\ ,\ \sum_{i=1}^{10}a_i=1.$ For Figure \ref{fig:Whole_pop_d_score_fraction}, we have an infinite number of such measures:

\[
\left\{\begin{array}{lll}
\alpha_1=\sum_{i=1}^{10}\tilde{a}^1_i\delta_{x_i}  \qquad & \tilde{a}^j_i=([a_i-\Delta,a_i+\Delta])_i \\
\vdots \qquad \qquad \qquad \qquad \quad ; & \\
\alpha_\infty=\sum_{i=1}^{10}\tilde{a}^\infty_i\delta_{x_i} \qquad & \sum_{i=1}^{10}\tilde{a}^j_i=1, j=1,\dots , \infty
\end{array}\right.
\]

\

They form a subspace $V\subset\mathcal{M}^1_+(\R).$
In this previous setting, our problem will consist in determining whether a certain subgroup in the data set (e.g., Figure \ref{fig:Compas_d_score_AA_fraction}) follows the same distribution in the general population within a range. In our previous notation, this is the same as the test if measure $\alpha_0$ is in $S.$

Formally, we can phrase this as a point-to-subspace query in the $\ell_\infty$ distance on the space of measures:



\begin{figure}[t!]
\centering
\includegraphics[scale=0.4]{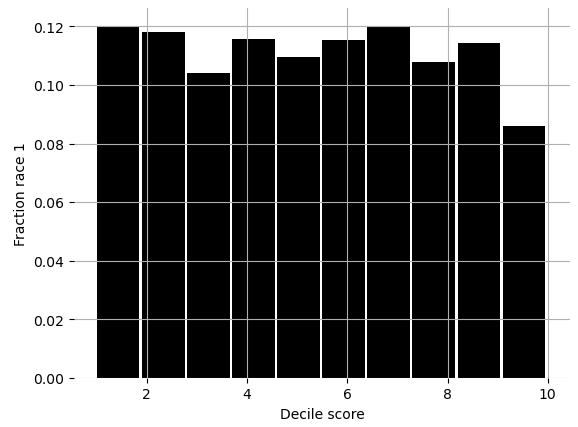}
\caption{Test measure, $\alpha_0$}
\label{fig:Compas_d_score_AA_fraction}
\end{figure}


\begin{algorithm}[tb]
    \caption{Point-to-subspace query in the supremum norm}
    \label{alg:mot_example}
    \textbf{Input}: Test measure $\alpha_0\in\mathcal{M}^1_+(\R),$ histogram $h\in\R^{10}_+$ 
    
    \textbf{Parameter}: $\Delta\in\R.$
    
    \textbf{Output}: True/False 
    \begin{algorithmic}[1]
        \STATE $a \gets \text{normalise }h$. 
        \FOR{i=1 to 10} 
        \IF{$|\alpha_0(x_i)-a_i|\geq\Delta$}
        \STATE \textbf{return}  FALSE
        \ENDIF
        \ENDFOR
        \STATE  \textbf{return} TRUE
    \end{algorithmic}
\end{algorithm}

The for loop calculates the $\ell_\infty$ norm between the test measure and our histogram approximation and compares it to the threshold $\Delta$, that is, it checks whether $\max_i|\alpha_0(x_i)-a_i|\lessgtr\Delta.$ Then the output corresponds to:
\[
\begin{array}{cc}
\text{TRUE } \leftrightarrow  \max_i|\alpha_0(x_i)-a_i| < \Delta \leftrightarrow\ \alpha_0\in V\subset\mathcal{M}^1_+(\R) \\
\text{FALSE} \leftrightarrow \max_i|\alpha_0(x_i)-a_i|\geq\Delta \leftrightarrow\ \alpha_0\notin V\subset\mathcal{M}^1_+(\R) \\
\end{array}
\]

\section{Further Definitions}




In this section, we introduce briefly the metrics between distributions mentioned in the Introduction \ref{sec:intro}. As we do not have direct access to the distributions $\alpha(x),\beta(y),$ we are forced to rely on estimators based on samples thereof. 


\subsection{Maximum Mean Discrepancy (MMD)}

This distance is defined as 
the maximum difference between the expected value of a function, $f,$ of $x\in\mathcal{X},$ and its counterpart for $y\in\mathcal{Y},$ the random variable of the distribution(measure) $\beta$, i.e.:

\begin{align}
\text{MMD}[\alpha,\beta]&=\sup_f\big[\E_{x\sim\alpha}[f(x)]-\E_{x\sim\beta}[f(y)]\big]\nonumber\\
&\equiv\sup_f\big[\int_{\mathcal{X}}f(x)\text{d}\alpha-\int_{\mathcal{X}}f(y)\text{d}\beta\big].
\end{align}

The rationale behind this definition is the fact that, if the function space where the witness function, $f,$ lives is big enough, we will have an accurate two-sample test. That is, we can ascertain wether $\alpha=\beta$ given that $\text{MMD}[\alpha,\beta]=0.$ This is guaranteed for the space of continuous bounded functions, $\mathcal{C}_b,$ cf. Lemma 1 \cite{gretton2008kernelmethodtwosampleproblem} (Lemma 9.3.2 \cite{MR1932358}). In practice, however, a different, more manageable function space, yet with the same guarantee, is used to work in: a Reproducing Kernel Hilbert Space (RKHS). A RKHS is a regular Hilbert space with a 

\begin{defn}[Reproducing kernel] \cite{MR2239907}

Let $E\neq\emptyset,$ be an abstract set. A reproducing kernel is a function 
\begin{align}\nonumber
k:E\times E &\to \CC \\ \nonumber
(s,t) &\mapsto k(s,t)
\end{align}
\[
\text{s.t. } 
\underbrace{\left\{ 
\begin{array}{ll}
\forall t\in E &, k(\cdot,t)\in\mathcal{H}\\
\forall t\in E, \forall\varphi\in\mathcal{H} &, \langle\varphi,k(\cdot,t)\rangle=\varphi(t)
\end{array}
\right\}}_{\Rightarrow \forall (s,t)\in E\times E, k(s,t)=\langle k(\cdot,t),k(\cdot,s)\rangle}
\]

\end{defn}

Where we have used the notation $\left\{\begin{array}{ll} k(\cdot,t) \\ k(s,\cdot) \end{array}\right\}$ to refer to the mappings 
$\left\{
\begin{array}{ll}
s\mapsto k(s,t)\text{ with fixed }t \\
t\mapsto k(s,t)\text{ with fixed }s
\end{array}
\right\}$

\begin{defn}[MMD in RKHS]

In particular, the MMD can be defined on a unit ball in a universal (=dense in $\mathcal{C}_b$ in the $\infty$ norm) RKHS:

\begin{equation}
\text{MMD}[\alpha,\beta]=\sup_{\|f\|_{\mathcal{H}}\leq 1}\big[\int_{\mathcal{X}}f(x)\text{d}\alpha-\int_{\mathcal{X}}f(y)\text{d}\beta\big]\ ,\ \forall f\in\mathcal{H}
\end{equation}

\end{defn}

\subsection{Wasserstein distance}

The origin of the Wasserstein distance {\cite{peyre2020computationaloptimaltransport}} can be traced back to the work of Monge on Optimal Transportation (OT) \cite{monge1781memoire}. For the discrete case, we consider discrete measures $\alpha\in\mathcal{M}(\mathcal{X})$ and $\beta\in\mathcal{M}(\mathcal{Y}),$ or equivalently, simplices $a\in\Sigma_n$ and $b\in\Sigma_m.$ Given a map (the Monge map):
\begin{align}\nonumber
T: \mathcal{X} &\to \mathcal{Y} \\ \nonumber
(x_1,\dots,x_n) &\mapsto (y_1,\dots,y_m)
\end{align}

that verifies \begin{equation}\label{eq:mass_conserv}b_j=\sum_{i:T(x_i)=y_j}a_i\ ,\ \forall j\in 1,\dots,m,\end{equation} the problem seeks to minimise 
some transportation cost, parameterized by $c(x, y)\in\R^{n\times m}$ defined for $(x, y) \in \mathcal{X} \times \mathcal{Y}.$ Putting all together, Monge's discrete OT problem reads:

\begin{equation}\label{eq:disc_mong}
\begin{aligned}
\min_T \quad & \sum_{i} c\big(x_i,T(x_i)\big)\\
\textrm{s.t.} \quad & \sum_{i:T(x_i)=y_j}a_i=b_j\\
\end{aligned} \ \ ; \begin{array}{ll}\ c\in\R^{n\times m}\ ,\ a,\ b\in\Sigma_n,\ \Sigma_m \\\forall i\in 1,\dots,n \ ,\ \forall j\in 1,\dots,m 
\end{array}
\end{equation}

For the continuous problem, we have:
\begin{equation}\label{eq:cont_mong}
\begin{aligned}
\min_T \quad & \int_\mathcal{X} c\big(x,T(x)\big)d\alpha(x)\\
\textrm{s.t.} \quad & \int_{T^{-1}(B\subset\mathcal{Y})}d\alpha(x) = \beta(B\subset\mathcal{Y})\\
\end{aligned} ; \begin{array}{ll}
\alpha\in\mathcal{M}^1_+(\mathcal{X})\\
\beta\in\mathcal{M}^1_+(\mathcal{Y})
\end{array}
\end{equation}

The solution of this optimisation problem, gives as a result the most efficient way of transporting a pile of sand into a hole in the ground. It is therefore named sometimes as the earth mover's distance. This original formulation has its limitations: In the discrete case, \eqref{eq:disc_mong}, there might be no Monge map possible, while the continuous problem, \eqref{eq:cont_mong}, is not convex.

To overcome this difficulties, a relaxed version of the problem is proposed by Kantorovich in \cite{Kantorovich2006}. In this setting, two marginal conditions need to be fulfilled, instead of the mass conservation equation \eqref{eq:mass_conserv}. The Monge map is substituted by the coupling $P\in\R^{n\times m}$ between two probability vectors in $\Sigma_n,\Sigma_m$ in the discrete case:
\begin{equation}\label{eq:disc_kant}
\begin{aligned}
\min_P \quad & \sum_{ij} c_{ij} P_{ij}\\
\textrm{s.t.} \quad & \sum_j^m P_{ij} = a_i\\
  &\sum_i^n P_{ij} = b_j    \\
\end{aligned}\quad ,\ c, P\in\R_+^{n\times m}\ ,\ a,\ b\in\Sigma_n,\ \Sigma_m 
\end{equation}


In the continuous case, we have a coupling $\pi\in\mathcal{M}^1_+(\mathcal{X},\mathcal{Y})$ between two measures. Finally, the p-Wasserstein distance, $W_p(\alpha,\beta),\ p\in [ 1,\infty)$, is defined as: 

\begin{equation}\label{eq:cont_kant}
\begin{aligned}
W_p(\alpha,\beta) = \min_\pi \quad & \Big( \int_{\mathcal{X}\times\mathcal{Y}} c^p(x,y) d\pi(x,y) \Big)^{\frac{1}{p}}\\
\textrm{s.t.} \quad & \int_{A\times\mathcal{Y}} d\pi(x,y) = \alpha(A)\\
  &\int_{\mathcal{X}\times B} d\pi(x,y) = \beta(B)    \\
\end{aligned}
\end{equation}

where

\[
\pi\in\mathcal{M}^1_+(\mathcal{X},\mathcal{Y})\ ,\ \left\{\begin{array}{ll}\alpha\in\mathcal{M}^1_+(\mathcal{X})\ ,\ A\subset\mathcal{X} \\ \beta\in\mathcal{M}^1_+(\mathcal{Y})\ ,\  B\subset\mathcal{Y}\end{array}\right.
\]

\section{A Subsampling Scheme}
\label{sec:subsampling}


Having introduced the problem in Section \ref{sec:mot_example}, we generalise upon it. We will take into account several numerical attributes of a dataset rather than just one, as we exemplified with the attribute \emph{decile\_score} in the COMPAS dataset. Categorical attributes may also be included if properly one-hot encoded. In this fashion, we call all the dataset attributes considered, encoded features, and denote them by $f_1,\dots, f_n.$ We believe it is worth consider in more detail the two-dimensional case, essentially because it is plottable in ordinary three-dimensional space:

\begin{example}

Let us continue the COMPAS example with encoded features 
$f_1=decile\_score, f_2=age.$ The equivalent of Figure \ref{fig:Whole_pop_d_score_fraction} will be the infinitely many $2$-dimensional histograms, in 3D (cf. Figure \ref{fig:Compas_3D}),
given by the discrete measures:

\[
\left.\begin{array}{lll}
\alpha_1=\sum_{i=1}^{10}\sum_{j=1}^{10}\tilde{a}^1_{ij}\delta_{\{x_i,y_j\}} \\
\vdots \\ 
\alpha_\infty=\sum_{i=1}^{10}\sum_{j=1}^{10}\tilde{a}^\infty_{ij}\delta_{\{x_i,y_j\}} 
\end{array}\right\}\in V\subset\mathcal{M}^1_+(\R^2) 
\]

\

where
\[
\Sigma_{10\times 10}\patri\tilde{a}^k_{ij}=([a_{ij}-\Delta,a_{ij}+\Delta])_{ij}\ ,\ \left\{\begin{array}{lll}
k=1,\dots , \infty\\
i=1,\dots , 10\\
j=1,\dots , 10
\end{array}\right.
\]

Upon which we will compare the test measure $\sum_{i=1}^{10}\sum_{j=1}^{10}a^0_{ij}\delta_{\{x_i,y_j\}}=\alpha^0\in\mathcal{M}^1_+(\R^2).$

\begin{figure}[t!]
\centering
\includegraphics[scale=0.5]{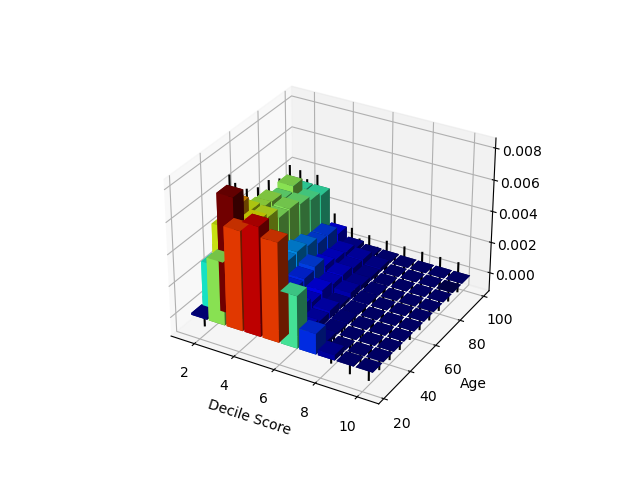}
\caption{decile\_score and age relative frequencies with error bars of length $\Delta$}
\label{fig:Compas_3D}
\end{figure}

\end{example}


As we have seen from the example, the number of hyperplanes $|\alpha^0(x_i, y_j)-a_{i j}|\leq\Delta$ scales very fast. Therefore, we consider a subsampling scheme, where the features are selected uniformly at random.






In order to implement this subsampling scheme, we fix a histogram approximation of the test measure and the subspace. 
In fact, this will depend on the binning of every attribute. Denote by $b_i$ the number of bins in the histogram of attribute $i.$ The infinite $n-$dimensional histograms will be given by:


\begin{equation}\label{eq:histogram_approx}
\left.\begin{array}{lll}
\alpha_1=\sum_{i,\dots ,z}^{b_1,\dots,b_n}\tilde{a}^1_{i\dots z}\delta_{\{x_i,\dots,w_z\}} \\
\vdots \\ 
\alpha_\infty=\sum_{i,\dots ,z}^{b_1,\dots,b_n}\tilde{a}^\infty_{i\dots z}\delta_{\{x_i,\dots,w_z\}}
\end{array}\right\}\in V\subset\mathcal{M}^1_+(\R^n) 
\end{equation}

where
\begin{equation}
\underbrace{\tilde{a}^i_{j\dots z}}_{\in\Sigma_{b_1\times\dots\times b_n}}=([a_{j\dots z}-\Delta,a_{j\dots z}+\Delta])_{j\dots z}\ 
\left\{\begin{array}{llllll}
i=1,\dots , \infty \\
j=1,\dots , b_1 \\
k=1,\dots , b_2 \\
\vdots \\
z=1,\dots,b_n
\end{array}\right.
\end{equation}

Upon which we will compare the test measure $\sum_{i,\dots ,z}^{b_1,\dots,b_n}a^0_{i\dots z}\delta_{\{x_i,\dots,w_z\}}=\alpha^0\in\mathcal{M}^1_+(\R^n).$ By checking the $N=\prod^{n}_{i=1}b_i$ inequalities:
\begin{equation}\label{eq:gen_case_compar}
|\alpha^0(x_j,\dots,w_z)-a_{j\dots z}|\leq\Delta
\end{equation}









Out of the total $N=\prod^{n}_{i=1}b_i$ points (=bins) in $\R^{n},$ denoted by the coordinates 
\[
(x_j,\dots ,w_z)\left\{\begin{array}{llll}
j=1,\dots , b_1 \\
k=1,\dots , b_2 \\
\vdots \\
z=1,\dots,b_n
\end{array}\right.,
\]

we will pick $s,$ uniformly at random, resulting in the $S\subset (x_i,\dots ,w_z),$ bins, with $|S|=s.$ Then, the projection of a discrete measure on these sampled bins is defined as:

\begin{defn}\label{def:proj}

The projection of a discrete measure, $\alpha=\sum_{i\dots z}a_{i\dots z}\delta_{\{x_i,\dots w_z\}},\alpha\in\mathcal{M}^1_+(\R^n)$ on $S\subset (x_i,\dots ,w_z), |S|=s,$ is the restriction
\[
\alpha |_S=\sum_{i\dots z}a_{i\dots z}\delta_{\{S\}}, \alpha|_S\in\mathcal{M}_+(\R^n)
\]

\end{defn}

The projection of the subspace $V\subset\mathcal{M}^1_+(\R^n)$ is obtained by projecting its constituent measures.

We then propose a subsampling approach in which the output of the algorithm is accurate up to some probability with some guarantees on the number of samples, $s.$
See Algorithm \ref{alg:general_case}. At first, the histogram approximation \eqref{eq:histogram_approx} of the subspace $V\subset\mathcal{M}^1_+(\R^n)$ is computed. 
We sample a subset $S$ of the $s$ bins of the histogram approximation uniformly at random.
We compute a projection $\alpha |_S$ of the histogram approximation of the test measure on the subset $S$ of bins, as in Definition \ref{def:proj}. 
We compute a projection $V |_S$ of the histogram approximation of the subspace on the subset $S$ of bins, by projecting its measures. 
The $\ell_0$ distance is calculated between the projections $\alpha |_S$ and $V |_S$. 

If the $\ell_0$ distance of the projections is not greater than $\Delta$, we estimate that the test measure is inside the subspace. 
The output is TRUE $\leftrightarrow \alpha^0\in V\subset\mathcal{M}^1_+(\R),$ although the estimate can be a false positive with a probability that we bound in Theorem \ref{thm:main} below.

If the $\ell_0$ distance of the projections is greater than the threshold $\Delta$, we know the original 
test measure $\alpha$ is at least $\Delta$ away from the original subspace $V$. 
We report FALSE $\leftrightarrow \alpha^0\notin V\subset\mathcal{M}^1_+(\R).$

The number of samples required for a determined certainty level is a matter of the next section.


\begin{algorithm}[tb]
    \caption{Subsampled point-to-subspace query in the supremum norm}\label{alg:general_case}
    
    \textbf{Input}: Number of samples taken independently uniformly at random, $s,$ Test measure $\alpha_0\in\mathcal{M}^1_+(\R^n),$  histograms $h_i\ ,\ i=1,\dots,n$
    
    \textbf{Parameter:} Threshold $\Delta\in\R$
    
    \textbf{Output}: True/False 
    \begin{algorithmic}[1]
        \STATE $a^i \gets \text{normalise }h_i\ ,\ i=1,\dots, n$ 
        \STATE Out of the $N$ bins in the $n$-dimensional histogram, choose  $S\subset (x_i,\dots ,w_z),$ bins, $|S|=s.$
        \FOR{bin $\in$ S} 
        \IF{$|\alpha^0(x_i,\dots,w_z)|_S-a_{i\dots z}|_S|\geq\Delta$} 
        \RETURN FALSE
        \ENDIF
        \ENDFOR
        \STATE \textbf{return} TRUE
    \end{algorithmic}
\end{algorithm}

\section{Main Result}

Our main result shows that the subsampling scheme of Section~\ref{sec:subsampling} is a probabilistically approximately correct estimator. 
Specifically, we bound from below the number of encoded features (coordinates) required to obtain a one-sided error of the estimate of the  $\ell_\infty$  point-to-subspace distance at a fixed probability level. We include first several definitions for completeness.

\begin{defn}[\cite{MR3408730}]\label{def:range_space}
A range space is a pair $(X,\mathcal{R}),$ $X$ being a set, and $\mathcal{R},$ a family of subsets of $X,$ $\mathcal{R}\subseteq 2^X.$
\end{defn}

\begin{defn}[\cite{MR3408730}]\label{def:shattering}
    Let $(X,\mathcal{R})$ be a range space and let $A\subset X$ be a finite set. If the set of all subsets of $A$ that can be obtained by intersecting $A$ with any range $\mathcal{R},$ $\Pi_\mathcal{R}(A),$ equals its power set, that is:
    \[\Pi_\mathcal{R}(A)=2^A,\]
    then we say that $A$ is shattered by $\mathcal{R}.$
\end{defn}

\begin{defn}[\cite{MR3408730}]\label{def:vc_dim}
The VC-dimension of a range space is the smallest integer $d$ such that no finite set $A\subset X$ of cardinality $d+1$ is shattered by $\mathcal{R}.$ If no such $d$ exists, the VC-dimension is infinite.
\end{defn}


\begin{theorem}\label{thm:main}
Consider a test measure $\alpha^0\in\mathcal{M}^1_+(\R^n),$ and
a subspace $V \in \mathcal{M}^1_+(\R^n)$,
such that
$\ell_0(v_{hist}(\alpha)) \le \epsilon N$,
where $v_{hist}(\alpha)$ is the vector of violations of the constraint $v(\alpha) := \{ \max\{ |\alpha^0(x_i)-a_i| - \Delta, 0 \}, i \in hist \}$.
In that case 
Algorithm \ref{alg:general_case}, taking $s$ independent samples uniformly at random, where 
\[
s=O\left(\frac{n\log n}{\epsilon}\log\frac{n\log n}{\epsilon}+\frac{1}{\epsilon}\log\frac{1}{\delta}\right),\ \epsilon,\delta\in(0,1)
\]

produces a false positive, that is, reports that the test measure, $\alpha^0,$ is in $S\subset\mathcal{M}^1_+(\R^n),$ while it is not, with probability $\delta.$
\end{theorem}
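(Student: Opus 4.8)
The plan is to derive Theorem~\ref{thm:main} from the classical $\epsilon$-net theorem for range spaces of bounded VC-dimension (Haussler--Welzl, in the form stated in~\cite{MR3408730}), reducing the false-positive guarantee to the single statement that a uniformly random set of bins is, with probability at least $1-\delta$, an $\epsilon$-net for the family of violation sets induced by the point-to-subspace constraints~\eqref{eq:gen_case_compar}.

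First I would make the range space $(X,\mathcal{R})$ explicit. The ground set $X$ is the collection of $N=\prod_{i=1}^n b_i$ bins of the histogram approximation~\eqref{eq:histogram_approx}. For a query --- a test measure together with the subspace centre --- the corresponding range is its violation set $\{(x_j,\dots,w_z) : |\alpha^0(x_j,\dots,w_z)-a_{j\dots z}|\ge\Delta\}$, i.e.\ the support of $v_{hist}(\alpha)$. Letting the query range over the admissible $n$-feature histograms produces a family $\mathcal{R}\subseteq 2^X$ to which Definitions~\ref{def:range_space}--\ref{def:vc_dim} apply. The point of passing to this family, rather than to a single fixed violation set, is that it yields a guarantee uniform over all subgroup queries simultaneously; this is exactly what produces the leading $\tfrac{n\log n}{\epsilon}\log\tfrac{n\log n}{\epsilon}$ term, since a lone query would already be handled by the elementary bound $(1-\epsilon)^s\le e^{-\epsilon s}$, which yields only the second term.

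The key step, and the one I expect to be the main obstacle, is to bound the VC-dimension of $(X,\mathcal{R})$ by $d=O(n\log n)$. Each violation set is carved out by the two-sided thresholds $\pm\Delta$ applied across the $n$ encoded features, so a set $A\subseteq X$ shattered by $\mathcal{R}$ would have to realise all $2^{|A|}$ membership patterns using only these coordinatewise constraints. I would bound the number of realisable patterns $|\Pi_{\mathcal{R}}(A)|$ by combining the $n$ per-feature constraint families --- each an $O(1)$-VC family of one-dimensional thresholded sets --- through a Sauer--Shelah counting argument; the composition across the $n$ factors contributes the extra logarithmic factor, so that $|\Pi_{\mathcal{R}}(A)|<2^{|A|}$ once $|A|$ exceeds $c\,n\log n$ for an absolute constant $c$. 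By Definition~\ref{def:vc_dim} this certifies $d=O(n\log n)$.

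Granting $d=O(n\log n)$, I would invoke the $\epsilon$-net theorem~\cite{MR3408730}: a sample of
\[
s=O\!\left(\frac{d}{\epsilon}\log\frac{d}{\epsilon}+\frac{1}{\epsilon}\log\frac{1}{\delta}\right)
\]
bins drawn independently and uniformly at random is, with probability at least $1-\delta$, an $\epsilon$-net for $(X,\mathcal{R})$, and substituting $d=n\log n$ reproduces the stated sample size. On the event that $S$ is an $\epsilon$-net, every range of size at least $\epsilon N$ meets $S$, so whenever $\ell_0(v_{hist}(\alpha))\ge\epsilon N$ the \textbf{for} loop of Algorithm~\ref{alg:general_case} encounters a violated bin and returns FALSE. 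Consequently the algorithm can return TRUE while $\alpha^0\notin V$ only when $S$ fails to be an $\epsilon$-net, an event of probability at most $\delta$, which bounds the false-positive probability by $\delta$. A final routine remark is that drawing the $s$ bins without replacement only increases the probability of hitting a violation relative to the with-replacement model underlying the $\epsilon$-net theorem, so the bound carries over unchanged.
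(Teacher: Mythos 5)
Your overall architecture (bound a VC-dimension by $O(n\log n)$, invoke the $\epsilon$-net theorem, conclude that a sample hitting every large violation set forces a FALSE output) mirrors the paper's, but the step you yourself flag as the main obstacle --- bounding the VC-dimension of your chosen range space --- fails, and this is precisely where your route diverges from the paper's. You take the ground set to be the $N=\prod_i b_i$ bins and the ranges to be the violation sets $\{(x_j,\dots,w_z) : |\alpha^0(x_j,\dots,w_z)-a_{j\dots z}|\ge\Delta\}$ as the query varies over admissible pairs of histograms. But these are arbitrary joint histograms on the $n$-dimensional grid, not products of per-feature marginals, so the difference $\alpha^0(\cdot)-a_{\cdot}$ is an essentially arbitrary function of the bin and the violation set is its superlevel set: for any finite $A\subseteq X$ and any $B\subseteq A$ one can choose $\alpha^0$ and $a$ that agree off $B$ and differ by at least $\Delta$ on $B$ (renormalising elsewhere), so $A$ is shattered and the family has VC-dimension $N$, exponential in $n$. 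The proposed Sauer--Shelah composition of ``$n$ per-feature $O(1)$-VC threshold families'' cannot repair this, because the constraint $|\alpha^0(\cdot)-a_{\cdot}|\ge\Delta$ does not factor across the $n$ coordinates. The paper avoids the issue by working with a different (primal) range space: the ground set is the polyhedron in $\R^n$ cut out by the inequalities~\eqref{eq:gen_case_compar} and the ranges are the half-spaces themselves, whose VC-dimension is the classical $n+1$; the $O(n\log n)$ then arises from combining the two sign classes of the absolute value via a union-of-range-spaces theorem, not from a product over features.

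A second, smaller point: the theorem concerns one fixed test measure and one fixed subspace, so the uniformity over all queries that you are trying to buy with the VC machinery is not needed for the statement. The elementary bound $(1-\epsilon)^s\le e^{-\epsilon s}$, which you mention only as an aside, already gives the claimed false-positive probability with $s=O(\frac{1}{\epsilon}\log\frac{1}{\delta})$, which is subsumed by the stated $O(\cdot)$. In other words, your write-up contains a valid proof of the statement only in the parenthetical remark, while the argument you actually develop rests on a VC bound that is false for the range space you defined.
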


\begin{proof}(\emph{Sketch}) The proof technique is standard. 
Following the reasoning of \cite{Marecek_2022}, we identify as the range space, $X,$ (Definition \ref{def:range_space}) under study, the polyhedron in $\R^n$ delimited by the inequalities in \eqref{eq:gen_case_compar}, with ranges, $\mathcal{R}_1,\mathcal{R}_2$ given by these $2n$ half-spaces. We will then bound its VC dimension (Definition \ref{def:vc_dim}) and apply the $\epsilon-$net theorem (\cite{MR884223}\cite{10.1145/10515.10522}\cite{10.1145/12130.12173}\cite{MR884226}). In this light, assume that this range space has VC dimension $d$. If the number of independent, uniform, random samples $s$ is:
\[
s=O\left(\frac{d}{\epsilon}\log\frac{d}{\epsilon}+\frac{1}{\epsilon}\log\frac{1}{\delta}\right)\ ,\ \epsilon,\delta\in(0,1)
\]
We get an $\epsilon$-net (\cite{10.1145/10515.10522}) with probability at least $1-\delta.$ Therefore, Algorithm \ref{alg:general_case} errs with probability $\delta,$ for an input with $\epsilon N$ distances greater than the threshold, $\Delta,$ that is, by computing all the data, we will get $|\alpha^0(x_j,\dots,w_z)-a_{j\dots z}|\geq\Delta,$ exactly $\epsilon N$ times.
For an input in which there are no such, i.e., all the comparisons satisfy $|\alpha^0(x_j,\dots,w_z)-a_{j\dots z}|\leq\Delta,$ the algorithm does not err and returns TRUE always.

As already mentioned, we next prove that the VC dimension of the range spaces $(X,\mathcal{R}_1),$ and $(X,\mathcal{R}_2),$ is bounded above by $n+1$. The only difference between these ranges is the sign of the absolute value taken. We thus have $n$ half-spaces for each range. We choose one of the two, which we will denote for simplicity $(X,\mathcal{R}),$ and proceed identically for the second. From this $(X,\mathcal{R}),$ we can build the range space having as ranges not only the previous $n$ half-spaces given by the $n$ hyperplanes, but all possible half-spaces in $\R^n,\ (X,\mathcal{R}_n).$ We clearly have:
\[\text{VC dim}\big((X,\mathcal{R})\big)\leq\text{VC dim}\big((X,\mathcal{R}_n)\big)\]
The VC dimension of $(X,\mathcal{R}_n)$ is known to be $n+1.$ To demonstrate it, one has to prove both that:
(1) no more than $n+1$ points can be shattered (Definition\ref{def:shattering}).
(2) at least one set of $n+1$ points can be. 
Both are achieved in Section H.5 in \cite{Barba_Comp_Geom}, (1) by Radon's theorem. Additionally, (2) is the matter of Exercise 14.7 in \cite{MR3674428}.
Finally, by Theorem 14.5 \cite{MR3674428}, the union of the two range spaces has dimension $O(n\log n).$

\end{proof}

Note that there are no possible false negatives (cases in which Algorithm \ref{alg:general_case} claims that the test measure is outside the subspace while it truly belongs to it), simply because if there are no comparisons such that $|\alpha^0(x_j,\dots,w_z)-a_{j\dots z}|\leq\Delta,$ the algorithm will not be able to hit any such, no matter the number of samples taken.





\section{Experimental Results}






The experiments performed for this section have been implemented in Python. The Wassertein distances were computed using Python Optimal Transport, the module by \cite{flamary2021pot}. The code was run on a standard laptop equipped with an AMD Ryzen 7 CPU and 30 GB of RAM. The operating system was Debian GNU/Linux.

\subsection{The Data}

We used two sources of data: Adult data set \cite{misc_adult_2} and folktables \cite{ding2021retiring}, which we have used to retrieve data from the US census. Together with the widely used Adult data set, we deemed appropriate the inclusion of folktables, since the larger amount of data would showcase the advantage of the proposed method in a clearer way. It can actually be considered a superset of the Adult dataset. In both, the protected attribute studied was \emph{SEX}. 

Using folktables, we retrieved data from the US census from all fifty states available in the year 2018. In the case of Adult, we focussed on 2 encoded features. In folktables, we selected a total of 4 target attributes as encoded features:
\begin{align*}
    &\text{\emph{PINCP}: Total income. Continuous}\\
    &\text{\emph{SCHL}: Educational attainment. Categorical (1-24)}\\
    &\text{\emph{JWMNP}: Tavel time to work. Continuous}\\
    &\text{\emph{ESR}: Employment status recode. Categorical (1-6)}\\
\end{align*}

\subsection{The results}

We restricted our experiments to values for which a one-sided error may occur. We denote by $\Delta_{\text{min}}$ the minimum value for which all inequalities \eqref{eq:gen_case_compar} are violated for a given pair of distributions, i.e., $\epsilon =1$. The corresponding $\Delta_{\text{max}}$ is equal to the supremum norm. 
We would then consider values in the range:
\[
\Delta_{\text{min}}<\Delta<\Delta_{\text{max}}\iff\epsilon\in(0,1)
\]

Figure \ref{fig:comp} compares the probability error in the subsampled supremum norm vs. the error of the Wasserstein-2 distance for different numbers of samples in a low-dimensional setting. Figure \ref{fig:comp_wass} shows a high standard deviation, while the equivalent shaded region in Figure \ref{fig:comp_point} is barely visible, although it increases as $\epsilon$ decreases. The probability of an error drops substantially with an increase in sample size. However, as we decrease $\epsilon$ (= increase $\Delta$), the error rate will increase, as the few pairs of histograms not fulfilling \eqref{eq:gen_case_compar} will be increasingly difficult to catch by subsampling.
In particular, where the fraction of these pairs of histograms is a thousandth ($\epsilon = 0.001$), we will get an error more than half the time. 

As discussed above, we expect the number of samples required for the proposed approach to scale as $O(\frac{n\log n}{\epsilon}\log\frac{n\log n}{\epsilon})$ in the number of encoded features, $n$, whereas for the distances studied in the literature, the curse of dimensionality occurs (cf. \cite{scholkpf16} and Proposition 10 in \cite{weed2019sharp}). Figure \ref{fig:folk} shows the expected behavior as the dimension increases by including additional encoded features: the error probability is only slightly increased for each of the corresponding values of $\epsilon.$

\begin{figure}[!htb]
    \centering
    \begin{subfigure}[b]{0.4\textwidth}
        \includegraphics[scale=0.46]{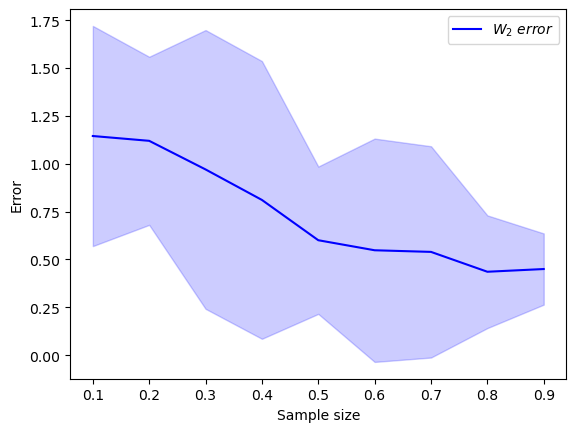}
        \caption{} 
        \label{fig:comp_wass}
    \end{subfigure}
    \begin{subfigure}[b]{0.4\textwidth}
        \includegraphics[scale=0.46]{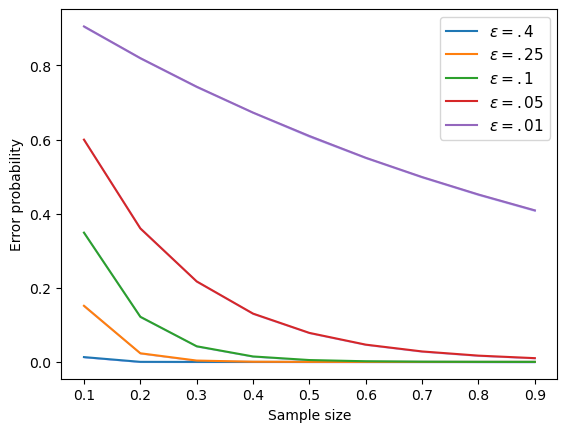}
        \caption{}
        \label{fig:comp_point}
    \end{subfigure}
    \caption{Probability of one-sided error for Wasserstein-2 and point-to-subspace distance in the supremum norm as a function of the sample size on the Adult dataset \cite{misc_adult_2}.}\label{fig:comp}
\end{figure}



\begin{figure}[t]
\centering
\includegraphics[scale=0.45]{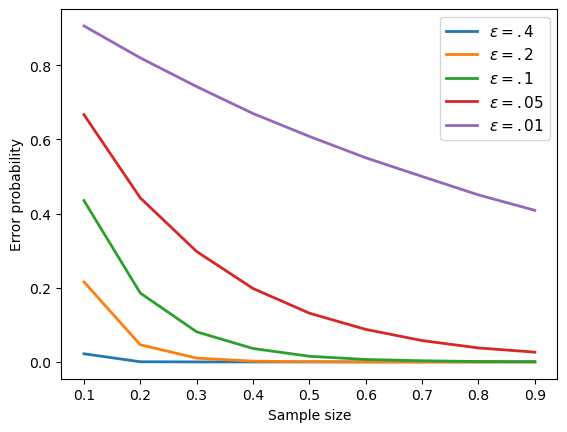}
\caption{Probability of one-sided error for point-to-subspace distance in the supremum norm as a function of the sample size on folktables \cite{ding2021retiring}.}
\label{fig:folk}
\end{figure}

\section{Conclusions}

We have presented probabilistically approximately correct (PAC) learnability of bias detection with respect to the supremum norm, with important applications in testing both the input (data quality) and output of AI systems. 

Overall, a substantially lower error can be obtained in the bias estimated using the supremum norm than in the bias estimated using the Wasserstein-2 norm, with a given budget in terms of sample complexity. Having a low error in estimating the bias, compared to the bias \emph{per se}, will be important in auditing the bias and any related judicial proceedings.

Moreover, within the PAC learning approach, one can control $\epsilon, \delta$ for a fixed test measure, e.g., the sample of cases available within the COMPAS data set, and consider the uncertainty in the estimate of the general population, e.g., census conducted every 10 years. 
The fixed size of the sample may be of importance in many applications, where the sample is obtained using freedom-of-information requests or requests made within AI-specific regulations.  
The fact that one can control $\epsilon, \delta$ also means that such an approach could be utilized in large language models, where traditional approaches based on optimal transport
\cite{vaserstein1969markov,vaserstein1969markov,dudley1969speed}, whose runtime scales superlinearly (often cubically) with the ambient dimension, may be challenging to apply.  

The results could be strengthened in a number of ways: one may wish to consider, for example, functions of bounded variation \cite{long1998sample}.

\nocite{}
\bibliographystyle{ieeetr}
\bibliography{refs.bib}
\end{document}